\DeclarePairedDelimiter{\ceil}{\lceil}{\rceil}
\newcommand{\q}[1]{`#1'}
\DeclareTextFontCommand{\textmyfont}{\pcr}
\newtheorem{theorem}{Theorem}
\newtheorem{definition}{Definition}
\providecommand{\keywords}[1]{\textbf{\textit{Index terms---}} #1}
\newcommand{\var}{\texttt}
\renewcommand{\SetKwInOut}[2]{%
  \sbox\algocf@inoutbox{\KwSty{#2\algocf@typo:}}%
  \expandafter\ifx\csname InOutSizeDefined\endcsname\relax
    \newcommand\InOutSizeDefined{}\setlength{\inoutsize}{\wd\algocf@inoutbox}%
    \sbox\algocf@inoutbox{\parbox[t]{\inoutsize}{\KwSty{#2\algocf@typo\hfill:}}~}\setlength{\inoutindent}{3em}
  \else
    \ifdim\wd\algocf@inoutbox>\inoutsize%
    \setlength{\inoutsize}{\wd\algocf@inoutbox}%
    \sbox\algocf@inoutbox{\parbox[t]{\inoutsize}{\KwSty{#2\algocf@typo\hfill:}}~}\setlength{\inoutindent}{3em}
    \fi%
  \fi
  \algocf@newcommand{#1}[1]{%
    \ifthenelse{\boolean{algocf@hanginginout}}{\relax}{\algocf@seteveryparhanging{\relax}}%
    \ifthenelse{\boolean{algocf@inoutnumbered}}{\relax}{\algocf@seteveryparnl{\relax}}%
    {\let\\\algocf@newinout\hangindent=\inoutindent\hangafter=1\parbox[t]{\inoutsize}{\KwSty{#2\algocf@typo\hfill:}}~##1\par}
    \algocf@linesnumbered
    \ifthenelse{\boolean{algocf@hanginginout}}{\relax}{\algocf@reseteveryparhanging}%
  }}%
\begin{document}

\title{LoRAS: An oversampling approach for imbalanced datasets}

\renewcommand\Authfont{\fontsize{9}{14.4}\selectfont}
\renewcommand\Affilfont{\fontsize{7}{10.8}\itshape}

\author[1]{Saptarshi Bej}
\author[1,2,3]{Narek Davtyan}
\author[1]{Markus Wolfien}
\author[1]{Mariam Nassar}
\author[1,4]{Olaf Wolkenhauer \thanks{Corresponding author: Prof.\! Olaf Wolkenhauer, Department of Systems Biology \& Bioinformatics, University of Rostock, Universit\"atsplatz 1, 18051 Rostock, Germany. E-mail: \href{mailto:olaf.wolkenhauer@uni-rostock.de}{olaf.wolkenhauer@uni-rostock.de}. Website: \href{https://www.sbi.uni-rostock.de/}{sbi.uni-rostock.de}. }}
\affil[1]{Department of Systems Biology and Bioinformatics,
		University of Rostock, Germany}
\affil[2]{Department of Computer Science and Applied Mathematics, Grenoble Alps University, France}
\affil[3]{School of Computer Science and Applied Mathematics, Grenoble Institute of Technology, France}
\affil[4]{Stellenbosch Institute for Advanced Study (STIAS), Wallenberg Research Centre at Stellenbosch University, Stellenbosch, South Africa}

\date{}

\maketitle

\begin{abstract}
    The Synthetic Minority Oversampling TEchnique (SMOTE) is widely-used for the analysis of imbalanced datasets. It is known that SMOTE frequently over-generalizes the minority class, leading to misclassifications for the majority class, and effecting the overall balance of the model. 
    In this article, we present an approach that overcomes this limitation of SMOTE, employing Localized Random Affine Shadowsampling (LoRAS) to oversample from an approximated data manifold of the minority class. 
    We benchmarked our algorithm with 14 publicly available imbalanced datasets using three different Machine Learning (ML) algorithms and compared the performance of LoRAS, SMOTE and several SMOTE extensions that share the concept of using convex combinations of minority class data points for oversampling with LoRAS. We observed that LoRAS, on average generates better ML models in terms of F1-Score and Balanced accuracy. Another key observation is that while most of the extensions of SMOTE we have tested, improve the F1-Score with respect to SMOTE on an average, they compromise on the Balanced accuracy of a classification model. LoRAS on the contrary, improves both F1 Score and the Balanced accuracy thus produces better classification models.
    Moreover, to explain the success of the algorithm, we have constructed a mathematical framework to prove that LoRAS oversampling technique provides a better estimate for the mean of the underlying local data distribution of the minority class data space.
  
\end{abstract} 
\keywords{Imbalanced datasets, Oversampling, Synthetic sample generation, Data augmentation, Manifold learning}
\section{Introduction}\label{Intro}
 
Imbalanced datasets are frequent occurrences in a large spectrum of fields, where Machine Learning (ML) has found its applications, including business, finance and banking as well as bio-medical science. 
Oversampling approaches are a popular choice to deal with imbalanced datasets \citep{SMOTE, Han2, He, Bunkhumpornpat2009, Barua2014}. 
We here present Localized Randomized Affine Shadowsampling (LoRAS), which produces better ML models for imbalanced datasets, compared to state-of-the art oversampling techniques such as SMOTE and several of its extensions. 
We use computational analyses and a mathematical proof to demonstrate that drawing samples from a locally approximated data manifold of the minority class can produce balanced classification ML models.
We validated the approach with 12 publicly available imbalanced datasets, comparing the performances of several state-of-the-art convex-combination based oversampling techniques with LoRAS. The average performance of LoRAS on all these datasets is better than other oversampling techniques that we investigated. In addition, we have constructed a mathematical framework to prove that LoRAS is a more effective oversampling technique since it provides a better estimate for local mean of the underlying data distribution, in some neighbourhood of the minority class data space.\par 
For imbalanced datasets, the number of instances in one (or more) class(es) is very high (or very low) compared to the other class(es). A class having a large number of instances is called a majority class and one having far fewer instances is called a minority class. This makes it difficult to learn from such datasets using standard ML approaches. Oversampling approaches are often used to counter this problem by generating synthetic samples for the minority class to balance the number of data points for each class. SMOTE is a widely used oversampling technique, which has received various extensions since it was published by \cite{SMOTE}. The key idea behind SMOTE is to randomly sample artificial minority class data points along line segments joining the minority class data points among $k$ of the minority class nearest neighbors of some arbitrary minority class data point. In other words, SMOTE produces oversamples by generating random convex combinations of two close enough data points.\par 
The SMOTE algorithm, however has several limitations for example: it does not consider the distribution of minority classes and latent noise in a data set \citep{Hu2009}. It is known that SMOTE frequently over-generalizes the minority class, leading to misclassifications for the majority class, and effecting the overall balance of the model \citep{punt}. Several other limitations of SMOTE are mentioned in \cite{Blagus2013}.
To overcome such limitations, several algorithms have been proposed as extensions of SMOTE. Some are focusing on improving the generation of synthetic data by combining SMOTE with other oversampling techniques, including the combination of SMOTE with Tomek-links \citep{ElhassanT2016}, particle swarm optimization \citep{Gao, Wang}, rough set theory \citep{Ram}, kernel based approaches \citep{Mathew}, Boosting \citep{Chawla2}, and Bagging \citep{Hanifah}. Other approaches choose subsets of the minority class data to generate SMOTE samples or cleverly limit the number of synthetic data generated \citep{Narayan}. Some examples are Borderline1/2 SMOTE \citep{Han2},  ADAptive SYNthetic (ADASYN) \citep{He}, Safe Level SMOTE \citep{Bunkhumpornpat2009}, Majority Weighted Minority Oversampling TEchnique (MWMOTE) \citep{Barua2014}, Modified SMOTE (MSMOTE), and Support Vector Machine-SMOTE (SVM-SMOTE) \citep{Suh} (see Table \ref{table_samples_cf}) \citep{Hu2009}. Another recent method, G-SMOTE, generates synthetic samples  in  a  geometric  region  of  the  input  space, around  each  selected  minority  instance \citep{GSMOTE}. Voronoi diagrams have also been used in recent research for improving classification tasks for imbalanced datasets. Because of properties inherent to Voronoi diagrams, a newly proposed algorithm V-synth identifies exclusive regions of feature space where it is ideal to create synthetic minority samples \citep{Vor, Vor2}.\par 

{\textbf{Related research and novelty:} A more recent trend in the research on imbalanced datasets is to generate synthetic samples, aiming to approximate the latent data manifold of the minority class data space. In \cite{Belli}, a general framework for manifold-based oversampling, especially for high dimensional datasets, is proposed for synthetic oversampling. The method has been successfully applied in \cite{Belli2} to deal with gamma-ray spectra classification. It produces a synthetic set $S$ of $n$ instances in the manifold-space by randomly sampling $n$ instances from the PCA-transformed reduced data space. In order to produce unique samples on the manifold, they apply i.i.d. additive Gaussian noise to each sampled instance prior to adding it to the synthetic set $S$, controlling the distribution of the noise through the Gaussian distribution parameters. The synthetic Gaussian instances are then mapped back to the feature space to produce the final synthetic samples \citep{Belli}. Another scheme, using auto-encoders to oversample from an approximated manifold, has also been discussed in \cite{Belli}. This approach selects random minority class samples by adding Gaussian noise to them, and using the auto-encoder framework first maps them non-orthogonally off the manifold and then maps them back orthogonally on the manifold \cite{Belli}. It remains unclear from this research how the approach would perform in terms of improving F1-Scores of imbalanced classification models as it focuses on relative improvement in the Area Under the (ROC) Curve (AUC) as a performance measure. According to \cite{Saito}, AUC of the Receiver Operating Characteristic Curve (ROC) curve might not be informative enough for imbalanced datasets. This issue has also been addressed in \cite{Davis}. Unlike the work of \cite{Belli} LoRAS relies on locally approximating the manifold by generating random convex combination of noisy minority class data points. Our oversampling strategy LoRAS, rather aims at improving the precision-recall balance (F1-Score) and class wise average accuracy (Balanced accuracy) of the ML models used. The F1-Score can measure how well the classification model handled the minority class classification, whereas Balanced accuracy provides us with a measure of how both majority and minority classes were handled by the classification model. Thus, these two measures together can give us a holistic understanding of a classifier performance on a dataset. \par  

Notably, in the pre-SMOTE era of research, related to oversampling there has been works aiming to enrich minority classes of imbalanced datasets by adding Gaussian noise \cite{noisy} and using the noisy data itself, as oversampled data. The strategy of generating oversamples with convex combinations of minority class samples is also well known, SMOTE itself being an example of such a strategy. Our oversampling strategy LoRAS leverages from a combination of these two strategies. Unlike \cite{noisy}, we generate Gaussian noise in small neighbourhoods around the minority class samples and create our final synthetic data with convex combinations of multiple noisy data points (shadowsamples) as opposed to SMOTE based strategies, that consider combination of only two minority class data points. Adding the shadowsamples allows LoRAS to produce a better estimate for local mean of the latent minority class data distribution.}\par

We also provide a mathematical framework to show that convex combinations of multiple shadowsamples can provide a proper estimate for the local mean of a neighbourhood in the minority class data space. To be specific, an LoRAS oversample is an unbiased estimator of the mean of the underlying local probability distribution, followed by a minority class sample (assuming that it is some random variable) such that the variance of this estimator is significantly less than that of a SMOTE generated oversample, which is also an unbiased estimator of the mean of the underlying local probability distribution, followed by a minority class sample. In addition to this, LoRAS provides an option of choosing the neighbourhood of a minority class data point by performing prior manifold learning over the minority class using t-Stochastic Neighbourhood Embedding (t-SNE) \citep{tsne}. t-SNE is a state-of the art algorithm used for dimension reduction maintaining the underlying manifold structure in a sense that, in a lower dimension t-SNE can cluster points, that are close enough in the latent high dimensional manifold. It uses a symmetric version of the cost function used for it's predecessor technique Stochastic Neighbourhood Embedding (SNE) and uses a Student-t distribution rather than a Gaussian to compute the similarity between two points in the low-dimensional space. t-SNE employs a heavy-tailed distribution in the low-dimensional space to alleviate both the crowding problem and the optimization problems of SNE \citep{tsne,sne}.   \par

Till date there are at least eighty five extension models built on SMOTE \citep{SMOTEVAR}. Considering a large number of benchmark datasets explored in our study, it was necessary to shortlist certain oversampling algorithms for a comparative study. We found quite a few studies that have applied or explored SMOTE and extension of SMOTE such as Borderline1/2 SMOTE models, ADASYN, and SVM-SMOTE \citep{Suh, Ah-Pine2016,Adisania,Chiama, Wang, Le}. Moreover all these oversampling strategies are focused on oversampling from the convex hull of small neighbourhoods in the minority class data space, a similarity that they share with our proposed approach. Considering these factors, we choose to focus on these five oversampling strategies for a comparative study with our oversampling technique LoRAS.

\begin{longtable}[ht!]{l l} 
\caption{Popular algorithms built on SMOTE.}
\label{table_samples_cf} \tabularnewline
\hline
Extension & Description\\ [0.5ex] 
\hline\hline
Borderline1/2 SMOTE \citep{Han2} & {Identifies borderline samples and applies SMOTE on them}  \\ 
\hline
ADASYN \citep{He} &  {Adaptively changes the weights of different minority samples} \\
\hline
SVM-SMOTE \citep{Suh}  & {Generates new minority samples near borderlines with SVM}  \\
\hline
Safe-Level-SMOTE \citep{Bunkhumpornpat2009} & {Generates data in areas that are completely safe}  \\
\hline
MWMOTE \citep{Barua2014} &  {Identifies and weighs ambiguous minority class samples} \\
\hline
\vspace{-4mm}
\end{longtable}   

\section{LoRAS: Localized Randomized Affine Shadowsampling}\label{LoRAS}

In this section we discuss our strategy to approximate the data manifold, given a dataset. A typical dataset for a supervised ML problem consists of a set of \textit{features} $F=\{f_1, f_2, \dots\}$, that are used to characterize patterns in the data and a set of \textit{labels} or ground truth. Ideally, the number of instances or samples should be significantly greater than the number of features. In order to maintain the mathematical rigor of our strategy we propose the following definition for a \textit{small dataset}. 

\begin{definition}\label{sd}
Consider a class or the whole dataset with $n$ samples and $|F|$ features. If $\log_{10}(\frac{n}{|F|})<1$, then we call the dataset, a \textit{small dataset}.  
\end{definition}

The LoRAS algorithm is designed to learn from a dataset by approximating the underlying data manifold. 
Assuming that $F$ is the best possible set of features to represent the data and all features are equally important, we can think of a data oversampling model to be a function $g:  \prod_{i=1}^{l} R^{|F|} \rightarrow  R^{|F|}$, that is, $g$ uses $l$ parent data points (each with $|F|$ features) to produce an oversampled data point in $R^{|F|}$. 

\begin{definition}\label{afcomb}
We define a \textit{random affine combination} of some arbitrary vectors as the affine linear combination of those vectors, such that the coefficients of the linear combination are chosen randomly. Formally, a vector $v$, $v=\alpha_1u_1+\dots+\alpha_nu_m$, is a random affine combination of vectors $u_1,\dots,u_m$, ($u_j\in R^{|F|}$) if $\alpha_1+\dots+\alpha_m=1$, $\alpha_j\in R^{+}$ and $\alpha_1,\dots,\alpha_m$ are the coefficients of the affine combination chosen randomly from a Dirichlet distribution.
\end{definition}
The simplest way of augmenting a data point would be to take the average (or random affine combination with positive coefficients as defined in  Definition \ref{afcomb}) of two data points as an augmented data point. But, when we have $|F|$ features, we can assume that the hypothetical manifold on which our data lies is $|F|$-dimensional. An $|F|$-dimensional manifold can be locally approximated by a collection of $(|F|\!\!-\!\!1)$-dimensional planes.\par
Given $|F|$ sample points we could exactly derive the equation of an unique $(|F|\!\!-\!\!1)$-dimensional plane containing these $|F|$ sample points. Note that, a small neighbourhood of a dataset can itself be considered as a small dataset. A small neighbourhood of $k$ points around a data point in a dataset, given sufficiently small $k$, satisfies Definition \ref{sd}, that is $k$ and $|F|$ satisfies, $\log_{10}(\frac{k}{|F|})<1$. Thus, considering $k$ to be sufficiently small we can assume that this small neighbourhood is a small dataset. To enrich this small dataset, we create \textit{shadow data points} or \textit{shadowsamples} from our $k$ parent data points in the minority class data point neighbourhood. Each shadow data point is generated by adding noise from a normal distribution, $\mathscr{N}(0, h(\sigma_f))$ for all features $f \in F$, where $h(\sigma_f)$ is some function of the sample variance $\sigma_f$ for the feature $f$. For each of the $k$ data points we can generate $m$ shadow data points such that, $k \times m  \gg |F|$. Now it is possible for us to choose $|F|$ shadow data points from the $k \times m$ shadow data points even if $k<|F|$. We choose $|F|$ shadow data points as follows: we first choose a random parent data point $p$ and then restrict the domain of choice to the shadowsamples generated by the parent data points in $N_k^p$. \par

 For high dimensional datasets, choosing k-nearest neighbours of data point using simple Euclidean, Manhattan or general Minkowski distance measures can be misleading in terms of approximating the latent data manifold. To avoid this, we propose to adopt a manifold learning based strategy. Before choosing the k-nearest neighbours of a data point, we perform a dimension reduction on the data points of the minority class using the well-known dimension reduction and manifold learning technique t-SNE \citep{tsne}. Once we have a two dimensional t-embedding of the minority class data, we choose the k-nearest neighbours of a particular data point consistent to its k-nearest neighbours (measured as per usual distance metrics) in the 2-dimensional t-SNE embedding of the minority class.\par

Once we choose our neighbourhood and generate the shadowsamples, we take a random affine combination with positive co-efficients (Convex combination) of the $|F|$ chosen shadowsamples to create one augmented Localized Random Affine Shadowsample or a LoRAS sample as defined in  Definition \ref{afcomb}. Considering the arbitrary low variance that we can choose for the Normal distribution from which we draw our shadowsamples, we assume that our shadowsamples lie in the latent data manifold itself. It is a practical assumption, considering the stochastic factors leading to small measurement errors. Now, there exists an unique $(|F|\! - \!1)$-dimensional plane, that contains the $|F|$ shadowsamples, which we assume to be an approximation of the latent data manifold in that small neighbourhood.
Thus, a LoRAS sample is an artificially generated sample drawn from an $(|F|\! - \!1)$-dimensional plane, which locally approximates the underlying hypothetical $|F|$-dimensional data manifold.  It is worth mentioning here, that the effective number of features in a dataset is often less than $|F|$. In high dimensional data there are often correlated features or features with low variance. Thus, for practical use of LoRAS one might consider generating convex combinations of effective number of features which might be less than $|F|$. \par

\begin{algorithm}[ht!]
    \SetArgSty{textnormal}
    \caption{Localized Random Affine Shadowsample (LoRAS) Oversampling}
    \label{algo}
    \SetKwInOut{In}{Inputs}
    \In{\newline
        \vspace{-4mm}
        \begin{flushleft}
        \begin{tabular}{ m{5em} m{10cm} }
            $\var{C\_maj}$: & Majority class parent data points \\ 
            $\var{C\_min}$: & Minority class parent data points \\ 
        \end{tabular}
        \end{flushleft}
    }
    \SetKwInOut{Parameter}{Parameters}
    \Parameter{\newline
        \vspace{-3mm}
        \begin{flushleft}
        \begin{tabular}{ m{5em} m{12cm} }
            $\var{k}$: & Number of nearest neighbors to be considered per parent data point\hfill\break (default value : $30$ if $|C_{\text{min}}| >= 100$, $5$ otherwise) \\ 
            $\var{\!|S\textsubscript{p}|}$:& Number of generated shadowsamples per parent data point\hfill\break \big(default value : $\max \big(\ceil*{\frac{2|F|}{\var{k}}},40 \big)$\big) \\
            $\var{L\textsubscript{\textsigma}}$: & List of standard deviations for normal distributions for adding noise to each feature\hfill\break (default value : $[0.005, \ldots, 0.005]$) \\
            $\var{N\textsubscript{aff}}$:& Number of shadow points to be chosen for a random affine combination\hfill\break (default value : $|F|$) \\
            $\var{N\textsubscript{gen}}$:& Number of generated LoRAS points for each nearest neighbors group\hfill\break \big(default value : $\frac{|C_{\text{maj}}|-|C_{\text{min}}|}{|C_{\text{min}}|}$\big)\\
            $\var{embedding}$:&Type of Embedding used to choose minority class neighbourhood (regular or  t-embedding) \hfill\break (default value : `regular' )\\
            $\var{perplexity}$:& Perplexity of t-embedding (applicable only if $\var{embedding}$=`t-embedding') \hfill\break (default value : 30) 
        \end{tabular}
        \end{flushleft}
    }
    \SetKwInput{Parameter}{Constraint}
    \Parameter{\newline
        \vspace{-5mm}
        \begin{flushleft}
        \begin{tabular}{l @{\hskip3pt}l}
            $\var{N\textsubscript{aff}} < k * \var{|S\textsubscript{p}|}$
        \end{tabular}
        \end{flushleft}
    }

    Initialize $\var{loras\_set}$ as an empty list\\
    \SetKwFor{For}{For}{do}{endfor}
    \SetKwRepeat{Repeat}{Repeat}{Until}
    \For{each minority class parent data point $\var{p}$ in $\var{C\_min}$}{
      $\var{neighborhood} \xleftarrow{}$ calculate $\var{k}$-nearest neighbors of $\var{p}$, as per selected $\var{Embedding}$ parameter and append $\var{p}$\\
        
        Initialize $\var{neighborhood\_shadow\_sample}$ as an empty list\\
        
        \For{each parent data point $\var{q}$ in $\var{neighborhood}$}{
            $\var{shadow\_points} \xleftarrow{}$ draw $\var{|S\textsubscript{p}|}$ shadowsamples for $\var{q}$ drawing noises from normal distributions with corresponding standard deviations $\var{L\textsubscript{\textsigma}}$ containing elements for every feature\\
            Append $\var{shadow\_points}$ to $\var{neighborhood\_shadow\_sample}$
        }
        
        \Repeat{$\var{N\textsubscript{gen}}$ resulting points are created}{
            $\var{selected\_points} \xleftarrow{}$ select $\var{N\textsubscript{aff}}$ random shadow points from $\var{neighborhood\_shadow\_sample}$\\
            $\var{affine\_weights} \xleftarrow{}$ create and normalize random weights for  $\var{selected\_points}$\\
            $\var{generated\_LoRAS\_sample\_point} \xleftarrow{} \var{selected\_points} \cdot \var{affine\_weights}$\\
            Append $\var{generated\_LoRAS\_sample\_point}$ to $\var{loras\_set}$
        }
        
    }
    Return resulting set of generated LoRAS data points as $\var{loras\_set}$
    
\end{algorithm}

 In this article, all imbalanced classification problems that we deal with are binary classification problems. For such a problem, there is a minority class  $C_{\text{min}}$ containing a relatively less number of samples compared to a majority class $C_{\text{maj}}$. We can thus consider the minority class as a small dataset and use the LoRAS algorithm to oversample.  For every data point $p$ we can denote a set of shadowsamples generated from $p$ as $S_p$. In practice, one can also choose $2 \leq N_{\text{aff}} \leq |F|$ shadowsamples for an affine combination and choose a desired number of oversampled points $N_{\text{gen}}$ to be generated using the algorithm. We can look at LoRAS as an oversampling algorithm as described in Algorithm~\ref{algo}.\par
The LoRAS algorithm thus described, can be used for oversampling of minority classes in case of highly imbalanced datasets. 
Note that the input variables for our algorithm are: number of nearest neighbors per sample $\var{k}$, number of generated shadow points per parent data point $\var{|S\textsubscript{p}|}$, list of standard deviations for normal distributions for adding noise to every feature and thus generating the shadowsamples $\var{L\textsubscript{\textsigma}}$, number of shadowsamples to be chosen for affine combinations $\var{N\textsubscript{aff}}$, number of generated points for each nearest neighbors group $\var{N\textsubscript{gen}}$ and embedding strategy $\var{embedding}$. There is a conditional input variable $\var{perplexity}$ which takes a positive numerical value if one chooses a t-embedding. The perplexity parameter of the t-SNE algorithm is quite crucial. The perplexity parameter can influence the t-Embedding calculated by the t-SNE algorithm. There have been several studies that address the issue on finding a right perplexity parameter for a given problem \citep{perp}. That is why, we recommend careful choice of this parameter in order to leverage more from our algorithm. Another important parameter of our algorithm is the $\var{N\textsubscript{aff}}$. For this parameters an ideal choice would be the number of effective features in a dataset since this number would be a reasonable approximation to the dimension of the underlying data manifold. One could employ a feature selection technique to find out a good estimate for this. A simple random grid search is also very helpful to get reasonably good estimates of these parameters. 
We have mentioned all the default values of the LoRAS parameters in Algorithm~\ref{algo}, showing the pseudocode for the LoRAS algorithm.  
As an output, our algorithm generates a LoRAS dataset for the oversampled minority class, which can be subsequently used to train a ML model.\par

\begin{figure}[ht!] 
 \includegraphics[scale=.8]{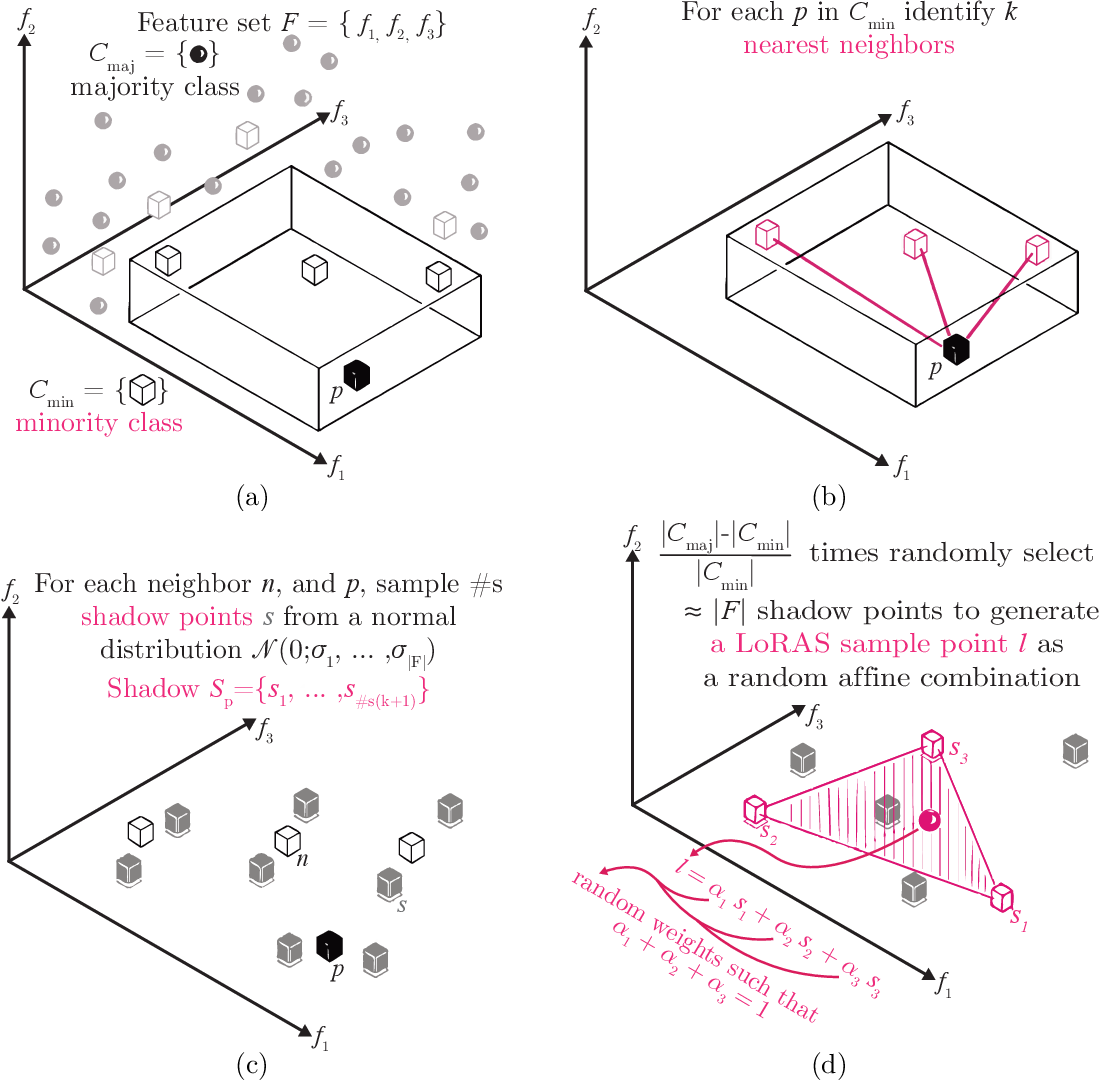}
    \caption{Visualization of the workflow demonstrating a step-by-step explanation for LoRAS oversampling. (a) Here, we show the parent data points of the minority class points $C_{\text{min}}$. For a data point $p$ we choose three of the closest neighbors (using knn) to build a neighborhood of $p$, depicted as the box. (b) Extracting the four data points in the closest neighborhood of $p$ (including $p$). (c) Drawing shadow points from a normal distribution, centered at these parent data point $n$. (d) We randomly choose three shadow points at a time to obtain a random affine combination of them (spanning a triangle). We finally generate a novel LoRAS sample point from the neighborhood of a single data point $p$.}
\end{figure}

\section{Case studies}\label{Case Studies and validation}

For testing the potential of LoRAS as an oversampling approach, we designed benchmarking experiments with a total of 14 datasets which are either highly imbalanced, high dimensional or with a small number of data points. With this number of diverse case studies we should have a comprehensive idea of the advantages of LoRAS over the other oversampling algorithms of our interest.  

\subsection{Datasets used for validation}\label{Datasets}
Here we provide a brief description of the datasets and the sources that we have used for our studies.\par  
\textbf{Scikit-learn imbalanced benchmark datasets:} The {\fontfamily{pcr}\selectfont imblearn.datasets} package is complementing the {\fontfamily{pcr}\selectfont sklearn.datasets} package. It provides 27 pre-processed datasets, which are imbalanced. The datasets span a large range of real-world problems from several fields such as business, computer science, biology, medicine, and technology. This collection of datasets was proposed in the {\fontfamily{pcr}\selectfont imblearn.datasets} python library by \cite{Lema} and benchmarked by \cite{Ding}. Many of these datasets have been used in various research articles on oversampling approaches \citep{Ding, saez}.  A statistically reliable benchmarking analysis of all 27 datasets in a stratified cross validation framework involves a lot of computational effort. We thus choose 11 datasets out of these two depending on two criteria: 
\begin{itemize}
    \item \textbf{Highly imbalanced:} We choose datasets with imbalance ratio more than 25:1. This category includes abalone\_19, letter\_image, mammography, ozone\_level, webpage, wine\_quality, yeast\_me2 datasets.  
    \item \textbf{High dimensional:} We choose the datasets with more than 100 features. This category includes arrhythmia, isolet, scene, webpage and yeast\_ml8.
\end{itemize}
Note that the $\var{webpage}$ dataset is common in both the criteria, giving us a total of 11 datasets. We choose these two categories because they are of special interest in research related to imbalanced datasets and have received extensive attention in this research area \citep{Anand, Hooda, Jing, Blagus2013}.\par
\textbf{Credit card fraud detection dataset:} We obtain the description of this dataset from the website. \url{https://www.kaggle.com/mlg-ulb/creditcardfraud}. \say{The dataset contains transactions made by credit cards in September 2013 by European cardholders. This dataset presents transactions that occurred in two days, where there are 492 frauds out of 284,807 transactions. The dataset is highly unbalanced, the positive class (frauds) account for 0.00172 percent of all transactions. The dataset contains only numerical input variables, which are the result of a PCA transformation. Feature variables $f_1, \dots, f_{28}$ are the principal components obtained with PCA, the only features that have not been transformed with PCA are the \q{Time} and \q{Amount}. The feature \q{Time} contains the seconds elapsed between each transaction and the first transaction in the dataset. The feature \q{Amount} consists of the transaction amount. The labels are encoded in the \q{Class} variable, which is the response variable and takes value 1 in case of fraud and 0 otherwise} \citep{cfraud}.\par
\textbf{Small datasets:} We were also interested to check the performance of LoRAS on small datasets. We obtained two such datasets: \href{https://www.openml.org/d/1059}{ar1}, \href{https://www.openml.org/d/1060}{ar3}. Both of these datasets have very few data points and less than 10 points in the minority class.\par
Thus, in total we benchmark our oversampling algorithms against the existing algorithms on a total of 14 datasets. We provide relevant statistics on these datasets in Table \ref{datastat}

\begin{table}[h!]
\caption{Table showing some statistics for the datasets we study in this article. For each dataset, we mark in bold the feature of the dataset that led us to its choice for our study.}

\label{datastat}
\begin{tabular}{l c c c}
\hline
Dataset & Imbalance ratio & Number of samples & Number of features \\
\hline\hline
abalone\_19 &	\textbf{130:1} & 4177 & 10  \\
\hline
arrythmia &	17:1 &	452 & \textbf{278} \\
\hline
isolet & 12:1 &	7797 & \textbf{617}   \\
\hline
letter-img & \textbf{26:1} & 20000 & 16 \\
\hline
mammography & \textbf{42:1} & 11183 & 6 \\
\hline
scene &	13:1 &	2407 &	\textbf{294}   \\
\hline
ozone\_level &	\textbf{34:1} &	2536 &	72 \\
\hline
webpage &	\textbf{33:1} &	34780 &	\textbf{300} \\
\hline
wine-quality  & \textbf{26:1}  &	4898  &	11    \\
\hline
yeast-me2  & \textbf{28:1}  &	1484  &	8  \\
\hline
yeast-ml8  & 13:1 &	2417  & \textbf{103}    \\
\hline
credit fraud  & \textbf{577:1} &	284807 &	28   \\
\hline
ar1  & 12.44:1 &	\textbf{121} &	30   \\
\hline
ar3  & 6.8:1 &	\textbf{63} &	30   \\
\hline
\end{tabular}
\end{table}

\subsection{Methodology}\label{method}
 For every dataset we have analyzed, we used a consistent workflow. Given a dataset, for every machine learning model, we judge the model performances based on a 5$\times$10-fold stratified cross validation framework. However, for the two small datasets ar1 and ar3 we use a 5$\times$3-fold stratified cross validation framework, since there are less than $10$ samples in the minority class. First we randomly scuffle the dataset. For a given dataset, we first split the dataset into 10 folds, each one distinct from the other maintaining the imbalance ratio for every fold. We then train the machine learning models on the dataset without any oversampling with 10-fold cross validation. This means that we train and test the model 10 times, each time considering a fold as a test fold and rest 9 folds as training folds. However, while training the ML models with oversampled data, we oversample only on the training folds and leave the test fold as they are for each training session. For each dataset we repeat the whole process five times to avoid the stochastic effects as much as possible.\par 

For the oversampling algorithms, for a given dataset, we chose the same neighbourhood size for every oversampling model. If there were less than 100 data points in the minority class the neighbourhood size was chosen to be 5. Otherwise we chose a neighbourhood size of 30. Given a large number of datasets we are analyzing, we did not customize this for every dataset and rather chose to stick to the above mentioned general rule. For LoRAS oversampling however, we performed a preliminary study to find out customized parameter values for every dataset, since the LoRAS algorithm is highly parametrized in nature. We tried several combinations of parameters $\var{N\textsubscript{aff}}$, $\var{embedding}$ and $\var{perplexity}$ employing random grid search.  For our initial study involving the parameter optimization of LoRAS, given a dataset, we performed a simple train-test split of the dataset (1:1 train-test split ratio), and then applied LoRAS with parameter grids on the training data to oversample and test the classifier performances on the test data. The training set is kept relatively small, so that the classifier does not gain much experience on the data while parameter estimation and gets prone to overfitting. This study was kept completely independent from our main cross-validation based results so that the samples from the test sets of our cross validation have minimum effect on parameter tuning. For parameter $\var{N\textsubscript{aff}}$ the grid interval is $[2,|F|]$, $|F|$ being the number of features. We choose five numbers while forming a search grid from this interval. Three of them are randomly chosen and the numbers $2$ and $|F|$ are always included in this set of $5$ numbers. For parameter $\var{embedding}$ we the grid values are the two possible entries that the parameter adopts. For the $\var{perplexity}$ parameter, we used grid values $[0.01, 0.1, 1, 10, 30, 100]$.\par
We emphasize here, that for all the algorithms including LoRAS, for a given dataset, we keep the neighbourhood size for every oversampling model fixed. For every oversampling model that we considered, the neighbourhood size for the oversampling model is the parameter that the model is highly sensitive to, since it contributes the most in determining the distribution of the oversampled minority class. For LoRAS, there are three (out of seven parameters in total)  parameters designed to better model/approximate the minority class data manifold (for example: the ones involving the t-SNE on the minority class), which are tuned to show the applicability of manifold approximation to improve convex combination based oversampling. However, as suggested, we keep all parameters related to the original distribution of the minority class, for all oversampling models fixed for all comparisons.
\par
However, considering the philosophy of LoRAS and a comparatively large number of parameters it use, we take liberty to tune the other parameters for LoRAS, since the other parameters are the key to a proper approximation or modelling of the minority class data manifold, which we argue to be the key factor behind the success of LoRAS.
\par
For LoRAS oversampling every dataset we use an unique value for $\var{N\textsubscript{aff}}$ as presented in Table \ref{paraset}. For individual ML models we use different settings for the LoRAS parameters $\var{embedding}$ and $\var{perplexity}$ which we mention explicitly in our supplementary materials while presenting the results for each ML model for each dataset. To ensure fairness of comparison, we oversampled such that the total number of augmented samples generated from the minority class was as close as possible to the number of samples in the majority class as allowed by each oversampling algorithm. Speaking of other parameters of the LoRAS algorithm, for $\var{L\textsubscript{\textsigma}}$, we chose a list consisting of a constant value of $.005$ for each dataset and for the parameter $\var{N\textsubscript{gen}}$ we chose the value as: $\frac{|C_{\text{maj}}|-|C_{\text{min}}|}{|C_{\text{min}}|}$. We provide a detailed list of parameter settings used by us for the oversampling algorithms in Table \ref{paraset}\par

\begin{longtable}[h!]{l |c| c| c }
\caption{In this table we present the details of parameter settings for the oversampling algorithms used by us for our experiment. The second column is the size of the oversampling neighbourhood and we have chosen the same size for all the oversampling models for each dataset in our analysis. The last three columns are specific to LoRAS parameters.}
\label{paraset} \tabularnewline
\hline
Dataset & Minority samples & Oversampling nbd & LoRAS $\var{N\textsubscript{aff}}$\\ [0.5ex] 
\hline\hline
abalone19 &	32 & 5 & 10  \\
\hline
arrythmia &	25 &	5 &	100\\
\hline
isolet & 600 &	30 &	179  \\
\hline
letter-img &	734 &	30 &	16 \\
\hline
mammography &	260 &	30 &	6  \\
\hline
scene &	177 &	30 &	2   \\
\hline
ozone\_level &	73 &	5 &	10 \\
\hline
webpage &	981 &	30 &	94 \\
\hline
wine-quality  & 183  &	30  &	2    \\
\hline
yeast-me2  &	51  &	5  &	2  \\
\hline
yeast-ml8  &	178 &	30  & 3    \\
\hline
credit fraud  & 492 &	30 &	30   \\
\hline
ar1  & 9 &	3 &	30   \\
\hline
ar3 & 8 &	3 &	10   \\
\hline
\end{longtable}

To choose ML models for our study we first did a pilot study with ML classifiers such as k-nearest neighbors (knn), Support Vector Machine (svm) (linear kernel), Logistic regression (lr), Random forest (rf), and Adaboost (ab). As inferred in \citep{Blagus2013} we found that knn was quite effective for the datasets we used. We also noticed that lr and svm performed better compared to rf and ab in most cases. We thus chose knn, svm and lr for our final studies. We used lbfgs solver for our logistic regression model and a linear kernel for our svm models. For our knn models, we choose 10 nearest neighbours for our prediction if there are less than 100 samples in the minority class and 30 nearest neighbours otherwise. For `arrhythmia', `abalone-19', `ar1' and `ar3' however we use only 5 nearest neighbours for the knn model since it has only  25, 32, 9 and 8 minority class samples respectively. We choose this parameter to be consistent to the neighbourhood size of the oversampling models, since the neighbourhood size directly influences the distribution of the training data and hence the model performance.\par

 In our analysis we take special notice of the credit card fraud detection dataset. This dataset is not included in the {\fontfamily{pcr}\selectfont imblearn.datasets} Python library. However, the main reason why we want to pay a special attention to this dataset is that, it is by far the most imbalanced publicly available dataset that we have come across. The extreme imbalance ratio of 577:1 is incomparable to any of the datasets in {\fontfamily{pcr}\selectfont imblearn.datasets}. Also, this dataset has received special attention of researchers attempting to use ML in Credit fraud detection \citep{credit}. In this article we see that lr and rf have good prediction accuracies on the dataset. Thus we chose these two ML models for the credit fraud dataset. \cite{credit} has also not provided cross validated analysis of their models, while our models have been trained and tested with the usual 10-fold cross validation framework as discussed before.
 Also, for two small datasets with a critically small minority class, we used only knn and lr classifiers, with parameter settings as specified before. The reason is, for all the 12 other datasets, svm did not stand out to be the best performer in terms of F1-Score in any of them.

For computational coding, we used the {\fontfamily{pcr}\selectfont scikit-learn (V 0.21.2)}, {\fontfamily{pcr}\selectfont  numpy (V 1.16.4)}, {\fontfamily{pcr}\selectfont  pandas (V 0.24.2)}, and {\fontfamily{pcr}\selectfont  matplotlib (V 3.1.0)} libraries in {\fontfamily{pcr}\selectfont  Python (V 3.7.4)}.\par

\section{Results}\label{results}

For imbalanced datasets there are more meaningful performance measures than \textit{Accuracy}, including \textit{Sensitivity} or \textit{Recall}, \textit{Precision}, and \textit{F1-Score} (\textit{F-Measure}), and \textit{Balanced accuracy} that can all be derived from the \textit{Confusion Matrix}, generated while testing the model. 
For a given class, the different combinations of recall and precision have the following meanings:
\begin{itemize}
    \item High Precision \& High Recall: The model handled the classification task properly
    \item High Precision \& Low Recall: The model cannot classify the data points of the particular class properly, but is highly reliable when it does so
    \item Low Precision \& High Recall: The model classifies the data points of the particular class well, but misclassifies high number of data points from other classes as the class in consideration
    \item Low Precision \& Low Recall: The model handled the classification task poorly
\end{itemize}
F1-Score, calculated as the harmonic mean of precision and recall and, therefore, balances a model in terms of precision and recall. 
These measures have been defined and discussed thoroughly by \cite{AbdElrahman2013}. 
Balanced accuracy is the mean of the individual class accuracies and in this context, it is more informative than the usual accuracy score. 
High Balanced accuracy ensures that the ML algorithm learns adequately for each individual class.\par
In our experiments we have noticed an interesting behaviour of oversampling models in terms of their average F1-Score and Balanced accuracy. Once we present our experiment results, we will discuss why considering F1-Score and Balanced accuracy can give us a clearer idea about model performances. 
We will use the above mentioned performance measures wherever applicable in this article.\par

\textbf{Selected model performances for all datasets:}
 We provide the detailed results of our experiments for all machine learning models as supplementary material. To be precise, for every combination of datasets, ML models and oversampling strategies we provide the mean and variance of the 10-fold cross validation process over 5 repetitions. For judging the performance of the oversampling models we follow the following scheme:
\begin{itemize}
    \item[-] First, for a given dataset, we choose the ML model trained on that dataset that provides the highest average F1-Score over all the oversampling models and training without oversampling. The F1-Score reflects the balance between precision and recall and considered as a reliable metric for imbalanced classification task.
    \item[-] We then consider the Balanced accuracy and F1- score of the chosen model as an evaluation of how well the oversampling model performs on the considered dataset. Following this evaluation scheme we present our results in Table \ref{table_imbsk}. 
\end{itemize}

\begin{longtable}[h!]{l |@{\hskip3pt}c@{\hskip3pt}|@{\hskip3pt} c @{\hskip3pt}|@{\hskip3pt} c@{\hskip3pt}|@{\hskip3pt} c@{\hskip3pt}| @{\hskip3pt}c @{\hskip3pt}|@{\hskip3pt}c@{\hskip3pt}|@{\hskip3pt} c@{\hskip3pt}|@{\hskip3pt} c@{\hskip3pt}}
\caption{Table showing Balanced accuracy/F1-Score for several oversampling strategies (Baseline, SMOTE, SVM-SMOTE, Borderline1 SMOTE, Borderline2 SMOTE, ADASYN and LoRAS column-wise respectively) for all 14 datasets of interest for ML learning models producing best average F1 score over all oversampling strategies and baseline training for respective datasets. }
\label{table_imbsk} \tabularnewline
\hline
Dataset & ML & Baseline & SMOTE & Bl-1 & Bl-2 & SVM & ADASYN & LoRAS\\ [0.5ex] 
\hline\hline

abalone19 &	knn &	.534/.000 &	.644/.054 &	.552/.044 &	.552/.044 &	.556/.045 &	.571/.055 &	\textbf{.675}/\textbf{.059} \\
\hline
arrythmia &	lr &	.679/.37 &	.666/.345 &	.672/.352 &	\textbf{.709}/.307 &	.679/.350 &	.667/.362 &	.694/\textbf{.380} \\
\hline
isolet &	lr &	.900/\textbf{.826} &	.898/.806 &	.899/.802 &	.906/.693 &	\textbf{.911}/.799 &	.898/.806 &	.904/.809 \\
\hline
letter-img &	knn &	.927/\textbf{.915} &	.988/.781 &	.984/.768 &	.977/.687 &	.986/.724 &	.985/.732 &	\textbf{.989}/.833\\
\hline
mammography &	knn &	.703/\textbf{.549} &	\textbf{.911}/.413 &	.909/.414 &	.899/.326 &	.909/.467 &	.905/.353 &	.896/.511 \\
\hline
scene &	lr &	.551/.168 &	.616/.222 &	.619/.230 &	\textbf{.620}/.223 &	.616/\textbf{.235} &	\textbf{.620}/.224 &	.616/.226 \\
\hline
ozone\_level &	lr &	.517/.062 &	.800/.190 &	.777/.212 &	.781/.183 &	.738/\textbf{.215} &	.803/.192 &	\textbf{.809}/.207 \\
\hline
webpage &	knn &	.805/\textbf{.711} &	.906/.267 &	.901/.274 &	.903/.287 &	.904/.267 &	.903/.264 &	\textbf{.923}/.613 \\
\hline
wine-quality  & lr  &	.517/.067  &	.718/.179  &	.715/.182  &	.711/.171  &	.712/\textbf{.216}  &	.721/.180  &	\textbf{.734}/.197 \\
\hline
yeast-ml8  &	knn  &	.500/.000  &	.558/.152  &	.561/.153  &	.563/.153  &	\textbf{.572}/\textbf{.158}  &	.558/.151  &	.559/.152 \\
\hline
yeast-me2  &	knn  &	.523/.074  &	.834/.331  &	.797/.373  &	.79/.304  &	.785/\textbf{.388}  & .825/.315	 &	\textbf{.842}/.354 \\
\hline
credit fraud  &	rf  &	.669/.775  &	.922/.359  &	.919/.645  &	.919/.556  &	.913/.741 &	.\textbf{.923}/.350  &	.904/\textbf{.820} \\
\hline
ar1  &	knn  &	.340/.071  &	.561/.306  &	.549/.298  &	\textbf{.594}/.338  &	.550/.324 &	.583/.320  &	.563/\textbf{.349} \\
\hline
ar3  &	rf  &	.634/.259  &	.810/.531  &	.809/\textbf{.584}  &	.819/.582  &	.755/.479 &	781/.457  &	\textbf{.823}/.563 \\
\hline
\textbf{Average} & - &	.636/.338 &	.775/.352 &	.764/.380 &	.771/.346 &	.759/.386 &	.777/.340 &	\textbf{.783}/\textbf{.433} \\
\hline
\textbf{Average rank} & - &	6.53/4.64 &	3.57/4.75 &	4.35/3.46 &	3.39/5.10 &	4.07/3.17 &	3.5/4.71 &	\textbf{2.57}/\textbf{2.14} \\
\hline
\end{longtable}

Calculating average performances over all datasets, LoRAS has the best Balanced accuracy and F1-Score. As expected, SMOTE improved Balanced accuracy compared to model training without any oversampling. Surprisingly, it lags behind in F1-Score,  for quite a few datasets with high baseline F1-Score such as letter\_image, isolet, mammography, webpage and credit fraud.    
Interestingly, the oversampling approaches SVM-SMOTE and Borderline1 SMOTE also improved the average F1-Score compared to SMOTE, but compromised for a lower Balanced accuracy. On the other hand, applying ADASYN increased the Balanced accuracy compared to SMOTE, but again compromises on the F1-Score. In contrast, our LoRAS approach produces the best Balanced accuracy on average by maintaining the highest average F1-Score among all oversampling techniques.  We want to emphasize that, even considering stochastic factors, LoRAS can improve both the Balanced accuracy and F1-Score of ML models significantly compared to SMOTE, which makes it unique.\par

\textbf{Datasets with high imbalance ratio:}
To verify the performance of LoRAS on highly imbalanced datasets we present average of the selected model performances for the datasets with highest imbalance ratios (among the ones we have tested) in Table \ref{imbr}.

\begin{longtable}[h!]{l |@{\hskip3pt}c@{\hskip3pt}|@{\hskip3pt} c @{\hskip3pt}|@{\hskip3pt} c@{\hskip3pt}|@{\hskip3pt} c@{\hskip3pt}| @{\hskip3pt}c @{\hskip3pt}|@{\hskip3pt}c@{\hskip3pt}|@{\hskip3pt} c@{\hskip3pt}}
\caption{ Table showing the average Balanced accuracy/F1-Score of the selected models for datasets with the highest imbalance ratios and high dimensional datasets separately}
\label{imbr} \tabularnewline
\hline
Average & Baseline & SMOTE & Bl-1 & Bl-2 & SVM & ADASYN & LoRAS\\ [0.5ex] 
\hline\hline
Highly imbalanced datasets &	.662/.381 &	.840/.321 &	.819/.364 &	.817/.319 &	.814/.382 &	.841/.305 &	\textbf{.846}/\textbf{.449} \\
\hline
High dimensional datasets & .687/.415 &	.728/.358 &	.730/.362 &	\textbf{.740}/.332 &	.736/.361 &	.729/.361 &	.739/\textbf{.436} \\
\hline
\end{longtable}
From our results we observe that LoRAS oversampling can significantly improve model performances for highly imbalanced datasets. LoRAS provides the highest F1-Score and Balanced accuracy among all the oversampling models. The results here show similar properties for SMOTE, Borderline-1 SMOTE, SVM SMOTE,  ADASYN and LoRAS as discussed before. Note that, for the credit fraud dataset, which is the most imbalanced among all, LoRAS has significant success over the other oversampling models in terms of Balanced accuracy. For the webpage dataset as well it improves the Balanced accuracy significantly, compromising minimally on the baseline F1-Score. The same trend follows for the letter\_image dataset. Notably, these three datasets have the highest number of overall samples as well, implying that with more data LoRAS can significantly outperform compared convex combination based oversampling models.
\par

\textbf{High dimensional datasets:} It is also of interest to us to check how LoRAS performs on high dimensional datasets. We therefore select five datasets with highest number of features among our tested datasets and present the performances of the selected ML methods in Table \ref{imbr}
From our results for high dimensional datasets, we observe that LoRAS produces the best F1-Score and second best Balanced accuracy on average among all oversampling models as Borderline-2 SMOTE beats LoRAS marginally. SMOTE improves both Balanced accuracy with respect to the baseline score here. Borderline-1 SMOTE and SVM SMOTE further increases SMOTE's performance both in terms of F1-Score and Balanced accuracy. Borderline-2 SMOTE, although improves the Balanced accuracy of SMOTE compromises on the F1-Score. Note that, even excluding the webpage dataset, where LoRAS has an overwhelming success, LoRAS still has the best average F1-Score and third highest Balanced accuracy marginally behind SVM-SMOTE and Borederline-2 SMOTE. We thus conclude, that for high dimensional datasets LoRAS can outperform the compared oversampling models in terms of F1-Score, while compromising marginally for Balanced accuracy.\par
\textbf{Small datasets:} For the two small datasets (with less than 10 samples in minority class) we have explored, we observed that LoRAS performs reasonably well. For the `ar1', LoRAS produces the best F1-Score and third best Balanced accuracy. For the `ar2' dataset LoRAS produces the best Balanced accuracy and the third best F1-Score. Note that LoRAS performs quite well for the `abalone' and `arrhythmia' datasets, which also have a small number of data points in the minority class.\par
 \textbf{Statistical Analysis:} Following \cite{Tarawneh}, we use the Wilcoxon's signed rank test to compare LoRAS against the other convex-combination based oversampling algorithms, in terms of both the performance measures we have used: F1-Score and Balanced accuracy. \cite{Tarawneh} chose this test for comparative studies since it is safer than parametric tests as it refrains from assuming homogeneity or normal distribution of data. Therefore, it can be applied to any classifier evaluation measure. \cite{Tarawneh} further confirms: `The Wilcoxon test aims to find if a null hypothesis is true or not. The null hypothesis $H_0$ assumes that there is no significant difference between the classification results (observations) obtained from two different methods. We assume that the null hypothesis is rejected if the p-value of the Wilcoxon test is less than $\alpha=0.05$'\citep{Tarawneh}.\par 

\begin{longtable}[h!]{l |@{\hskip3pt}c@{\hskip3pt}|@{\hskip3pt} c @{\hskip3pt}|@{\hskip3pt} c@{\hskip3pt}|@{\hskip3pt} c@{\hskip3pt}| @{\hskip3pt}c @{\hskip3pt}|@{\hskip3pt}c@{\hskip3pt}}
\caption{ Table showing p-values for comparison of LoRAS against the other oversampling algorithms, in terms of both the performance measures we have used: F1-Score and Balanced accuracy.}
\label{p-val wil} \tabularnewline
\hline
Measure & Baseline & SMOTE & Bl-1 & Bl-2 & SVM & ADASYN \\ [0.5ex] 
\hline\hline
F1-Score &	0.0303 &	0.0009 &	0.0479 &	.0035 &	0.0479 &	0.0009  \\
\hline
Balanced accuracy & 0.0009 &	0.0354 &	0.0258 &	0.5095 &	0.0382 &	0.1670  \\
\hline
\end{longtable}
From Table \ref{p-val wil} we observe that the p-values for all the paired tests are less than $0.05$ for the F1-Score, and therefore, the $H_0$ is rejected for all the paired tests in case of the F1-Score. Thus, the F1-Scores LoRAS produce have a big enough difference compared to the other compared algorithms, to be statistically significant. For Balanced accuracy, the algorithms Borderline-2 SMOTE and ADASYN do not show  significant statistical difference to LoRAS. However, since F1-Score is a more reliable and widely used metric for imbalanced datasets, we conclude that overall results generated by LoRAS are significantly different from the compared oversampling algorithms.\par
\cite{Tarawneh} also remarks that the p-value alone is informative enough and does not provide information about the relationship strength between variables. The p-values do not reveal whether the results are significantly different in favour of LoRAS or against LoRAS. For that following \cite{Tarawneh} we use the metrics $W_+$, $W_-$ and $R$. These are calculated using the following stpng:
\begin{itemize}
    \item For each data pair (involving LoRAS and some other oversampling algorithm) of model predictions , the difference between both predictions is calculated and stored in a vector $D$, excluding the zero difference values. 
    \item The signs of the difference is recorded in a sign vector $S$. 
    \item The entries in $|D|$ are ranked, forming a vector $R'$. In case of tied ranks, an average ranking scheme is adopted. This means, after ranking the entries of $|D|$ are ranked using integers and then, in case of tied entries the average of the integer ranks are considered as the average rank for all the respective tied entries with a specific tied value.
    \item Component-wise product of $S$ and $R'$ provides us with the vector $W$, the vector of the signed ranks. The sum of absolute values of the positive entries in $W$ is $W_+$ and the  sum of absolute values of the negative entries in $W$ is $W_-$. After this we define, $W_R=min\{W_+,W_-\}$
    \item Then the test statistic $Z$ is calculated by the equation
    \begin{equation}
    \begin{split}
    Z= \frac{W_R-\frac{n(n+1)}{4}}{\sqrt{\frac{n(n+1)(2n+1)}{24}-\frac{\Sigma t^3-\Sigma t}{48}}}
    \end{split}
    \end{equation}
    where $n$ is the number of components in $D$ and $t$ is the number of times some $i$-th entry occurs in $R'$, summed over all such repeated instances.
    \item Finally $R$ is calculated using $R=\frac{|Z|}{\sqrt{N}}$, where $N$ is the total number of datasets compared, which is $14$ in our case.
\end{itemize}

Note that a higher value $W_+$ for LoRAS indicates towards a superior performance of LoRAS and the value of $R$ indicates towards how superior(with a higher $W_+$)/ inferior(with a higher $W_-$) the performance of LoRAS is, compared to the other oversampling model for the tested datasets. \cite{Tarawneh} have considered ranges of $R\leq 0.1$, $0.1 < R \leq 0.5$ and $R>0.5$ to be indicators for small, medium and high degree of change (improvement or deterioration) in the predictive performance respectively.

\begin{longtable}[h!]{l |@{\hskip3pt}c@{\hskip3pt}|@{\hskip3pt} c @{\hskip3pt}|@{\hskip3pt} c@{\hskip3pt}|@{\hskip3pt} c@{\hskip3pt}| @{\hskip3pt}c @{\hskip3pt}|@{\hskip3pt}c@{\hskip3pt}}
\caption{ Table showing $W_+$/$W_-$ /$R$ for comparison of LoRAS against the other oversampling algorithms, in terms of both the performance measures we have used: F1-Score and Balanced accuracy.}
\label{wr wil} \tabularnewline
\hline
Measure & Baseline & SMOTE & Bl-1 & Bl-2 & SVM & ADASYN \\ [0.5ex] 
\hline\hline
F1-Score &	95/10/.713 &	105/0/.880 &	90/15/.629 &	102/3/.830 & 80/15/.629 &	105/0/.880  \\
\hline
Balanced accuracy & 105/0/.880 &	102/3/.830 &	95/10/.715 &	69/36/.286 &	95/10/.722 &	95/10/.837  \\
\hline
\end{longtable}

From Table \ref{wr wil} we note that, LoRAS has a higher $W_+$ value for both F1 Score and Balanced accuracy in comparison to each of the other convex combination based oversampling methods in consideration. Moreover for the F1 Score measure, the $R$ value is also more than $0.5$, indicating a high degree of improvement in F1-Score for LoRAS, over the considered oversampling models. Similarly, for Balanced accuracy, we find high degree of improvement for LoRAS, over all considered oversampling models except the Borderline-2 SMOTE, for which there is a medium degree of improvement. Overall, we thus conclude that LoRAS provides a significant improvement in performance over the compared convex combination based oversampling methods.   

\section{Discussion}\label{discussions}
We have constructed a mathematical framework to prove that LoRAS is a more effective oversampling technique since it provides a better estimate for the mean of the underlying local data distribution of the minority class data space. 
Let $X=(X_1,\dots,X_{|F|}) \in C_{\text{min}}$ be an arbitrary minority class sample. Let $N^X_k$ be the set of the k-nearest neighbors of $X$, which will consider the neighborhood of $X$. 
Both SMOTE and LoRAS focus on generating augmented samples within the neighborhood $N^X_k$ at a time. 
We assume that a random variable $X \in N^X_k$ follows a shifted t-distribution with $k$ degrees of freedom, location parameter $\mu$, and scaling parameter $\sigma$. 
Note that here $\sigma$ is not referring to the standard deviation but sets the overall scaling of the distribution \citep{Jackman}, which we choose to be the sample variance in the neighborhood of $X$. 
A shifted t-distribution is used to estimate population parameters, if there are less number of samples (usually, $\leq$ 30) and/or the population variance is unknown. 
Since in SMOTE or LoRAS we generate samples from a small neighborhood, we can argue in favour of our assumption that locally, a minority class sample $X$ as a random variable, follows a t-distribution. 
Following \cite{Blagus2013}, we assume that if $X, X'\in N^X_k$ then $X$ and $X'$ are independent. 
For $X, X'\in N^X_k$, we also assume:
\begin{equation}
\begin{split}
    \newcommand{\E}{{\rm I\kern-.3em E}}
    \E[X]&=\newcommand{\E}{{\rm I\kern-.3em E}}\E[X']\\
    & =\mu = (\mu_1,\dots,\mu_{|F|})\\
    \newcommand{\Var}{\mathrm{Var}}\Var[X]&=\newcommand{\Var}{\mathrm{Var}}\Var[X']\\
    & =\sigma^2\Big(\frac{k}{k-2}\Big) \\
    & =\sigma'^2=(\sigma'^2_1,\dots,\sigma'^2_{|F|})
\end{split}
\end{equation}  
where, $\newcommand{\E}{{\rm I\kern-.3em E}}
    \E[X]$ and $\newcommand{\Var}{\mathrm{Var}}\Var[X]$ denote the expectation and variance of the random variable $X$ respectively. However, the mean has to be estimated by an estimator statistic (i.e. a function of the samples). Both SMOTE and LoRAS can be considered as an estimator statistic for the mean of the t-distribution that $X \in C_{\text{min}}$ follows locally.
\begin{theorem}\label{thm}
Both SMOTE and LoRAS are unbiased estimators of the mean $\mu$ of the t-distribution that $X$ follows locally. However, the variance of the LoRAS estimator is less than the variance of SMOTE given that $|F|>2$.
\end{theorem}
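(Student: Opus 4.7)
The strategy is to express both oversamplers as random convex combinations of samples from $N^X_k$, compute their first two moments in closed form by exploiting the independence of the Dirichlet weights from the underlying data, and then compare. I would first write
\begin{equation*}
Y_S \;=\; \alpha_1 X_1 + \alpha_2 X_2, \qquad Y_L \;=\; \sum_{i=1}^{|F|} \beta_i\, S_i,
\end{equation*}
where $(\alpha_1,\alpha_2)\sim \mathrm{Dir}(1,1)$, $(\beta_1,\dots,\beta_{|F|})\sim \mathrm{Dir}(\mathbf{1})$, the $X_i \in N^X_k$ are the i.i.d.\ local draws from the shifted $t$-distribution (common mean $\mu$, componentwise variance $\sigma'^2$), and each shadowsample satisfies $S_i = X_{p(i)} + \varepsilon_i$ with $\varepsilon_i \sim \mathcal{N}(0,\sigma_\varepsilon^2)$ independent of the $X$'s. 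Unbiasedness is then a one-liner: by linearity and the independence of weights from samples, $\mathbb{E}[Y_S] = (\mathbb{E}[\alpha_1]+\mathbb{E}[\alpha_2])\mu = \mu$ and similarly $\mathbb{E}[Y_L] = \mu$, using $\mathbb{E}[\varepsilon_i]=0$ and $\sum_i\beta_i=1$ almost surely.

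For the variance I would prove a single auxiliary identity: if $Y = \sum_{i=1}^n w_i Z_i$ with $w \sim \mathrm{Dir}(\mathbf{1})$ independent of i.i.d.\ $Z_i$'s having mean $\mu$ and variance $v$, then
\begin{equation*}
\mathrm{Var}[Y] \;=\; v \sum_{i=1}^n \mathbb{E}[w_i^2].
\end{equation*}
This is obtained by expanding $\mathbb{E}[Y^2]$ into diagonal and off-diagonal indices: the diagonal contributes $(v+\mu^2)\sum_i\mathbb{E}[w_i^2]$ and the off-diagonal $\mu^2\sum_{i\neq j}\mathbb{E}[w_iw_j]$, so the $\mu^2$-pieces assemble into $\mu^2\,\mathbb{E}[(\sum_i w_i)^2]=\mu^2$, which exactly cancels $(\mathbb{E}[Y])^2=\mu^2$. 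For the symmetric Dirichlet on $n$ atoms one has $\mathbb{E}[w_i^2]=2/(n(n+1))$, hence $\sum_i \mathbb{E}[w_i^2]=2/(n+1)$. Plugging in $n=2,\,v=\sigma'^2$ yields $\mathrm{Var}[Y_S]=\tfrac{2}{3}\sigma'^2$; plugging in $n=|F|$ with the inflated per-shadow variance $v=\sigma'^2+\sigma_\varepsilon^2$ yields $\mathrm{Var}[Y_L]=\tfrac{2}{|F|+1}(\sigma'^2+\sigma_\varepsilon^2)$.

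The inequality $\mathrm{Var}[Y_L]<\mathrm{Var}[Y_S]$ now reduces to the elementary condition $|F| > 2 + 3\sigma_\varepsilon^2/\sigma'^2$, which collapses to the stated $|F|>2$ under the small-noise regime explicitly built into the algorithm (the default $L_\sigma=0.005$ makes $\sigma_\varepsilon^2/\sigma'^2$ negligible, in line with the paper's stance that shadowsamples are assumed to lie essentially on the data manifold). The main obstacle is that Dirichlet weights are \emph{not} mutually independent — they live on the simplex — so $\mathbb{E}[w_i w_j]$ does not factor as $\mathbb{E}[w_i]\mathbb{E}[w_j]$. The observation that dissolves this obstacle is the same simplex constraint: it cleanly absorbs every off-diagonal $\mu^2$-term into $(\mathbb{E}[Y])^2$, which is precisely what produces the clean $v\sum_i\mathbb{E}[w_i^2]$ identity and, in turn, the $2/(n+1)$ scaling that drives the variance reduction as $n=|F|$ grows. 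A minor secondary point is to justify that the chosen shadowsamples are conditionally i.i.d.\ given the neighborhood; this is immediate since the noises $\varepsilon_i$ are drawn independently of the parent assignments $p(i)$ and of the parents themselves.
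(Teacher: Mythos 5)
Your proposal is correct and takes essentially the same route as the paper: both establish unbiasedness via linearity and the independence of the Dirichlet weights from the samples, and both arrive at the identical variances $\mathrm{Var}[Y_S]=\tfrac{2}{3}\sigma'^2$ and $\mathrm{Var}[Y_L]=\tfrac{2}{|F|+1}\bigl(\sigma'^2+\sigma_B^2\bigr)$ from the Dirichlet moment structure, your uniform lemma $\mathrm{Var}[Y]=v\sum_i\mathbb{E}[w_i^2]$ being a tidier packaging of the paper's variance-plus-covariance expansion with $\mathrm{Cov}(\alpha_k,\alpha_l)$. If anything you are slightly more careful at the final step, making explicit that the comparison requires $|F|>2+3\sigma_B^2/\sigma'^2$ and hence the small-noise regime, whereas the paper asserts the conclusion for $|F|>2$ outright with the noise term implicitly neglected.
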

\begin{proof}
A shadowsample $S$ is a random variable $S=X+B$ where $X \in N^X_k$, the neighborhood of some arbitrary $X \in C_{\text{min}}$ and $B$ follows $\mathscr{N}(0,\sigma_B)$.
\begin{equation}
\begin{split}
    \newcommand{\E}{{\rm I\kern-.3em E}}
    \E[S]&=\newcommand{\E}{{\rm I\kern-.3em E}}
    \E[X]+ \E[B]\\
    & =\mu \\
    \newcommand{\Var}{\mathrm{Var}}\Var[S]&= \newcommand{\Var}{\mathrm{Var}}\Var[X]+\Var[B]\\
    & =\sigma'^2+\sigma_B^2 
\end{split}
\end{equation}
assuming $S$ and $B$ are independent. Now, a LoRAS sample $L=\alpha_1S^1+\dots+\alpha_{|F|}S^{|F|}$, where $S^1,\dots,S^{|F|}$ are shadowsamples generated from the elements of the neighborhood of $X$, $N^X_k$, such that $\alpha_1+\dots+\alpha_{|F|}=1$. The affine combination coefficients $\alpha_1,\dots,\alpha_{|F|}$ follow a Dirichlet distribution with all concentration parameters assuming equal values of 1 (assuming all features to be equally important). For arbitrary $i,j \in \left\{1, \dots, |F| \right\}$,
\begin{align*}
    \newcommand{\E}{{\rm I\kern-.3em E}}
    \E[\alpha_i] &=\frac{1}{|F|}\\ \newcommand{\Var}{\mathrm{Var}}\Var[\alpha_i] &=\frac{|F|-1}{|F|^2(|F|+1)}\\
    \newcommand{\Cov}{\mathrm{Cov}}\Cov(\alpha_i,\alpha_j) &=\frac{-1}{|F|^2(|F|+1)}
\end{align*}
where $\newcommand{\Cov}{\mathrm{Cov}}\Cov(A,B)$ denotes the covariance of two random variables $A$ and $B$. Assuming $\alpha$ and $S$ to be independent,
\begin{equation}
\label{eq:expectation}
    \newcommand{\E}{{\rm I\kern-.3em E}}
    \E[L] =\E[\alpha_1]\E[S^1]+\dots+\E[\alpha_{|F|}]\E[S^{|F|}] = \mu
\end{equation}
Thus $L$ is an unbiased estimator of $\mu$.
For $j,k,l \in \left\{1, \dots, |F| \right\}$,
\begin{equation}
\begin{split}
    \newcommand{\Cov}{\mathrm{Cov}}\Cov[\alpha_kS^k_j,\alpha_lS^l_j] &  =\newcommand{\E}{{\rm I\kern-.3em E}}
    \E[\alpha_kS^k_j\alpha_lS^l_j]-\E[\alpha_kS^k_j]\E[\alpha_lS^l_j]\\
    & = \newcommand{\E}{{\rm I\kern-.3em E}}
    \E[\alpha_k\alpha_l]\mu_j^2-\frac{\mu_j^2}{|F|^2}\\
    & =\Big[\newcommand{\Cov}{\mathrm{Cov}}\Cov(\alpha_k,\alpha_l)+\frac{1}{|F|^2}\Big]\mu_j^2-\frac{\mu_j^2}{|F|^2}=\mu_j^2\Cov(\alpha_k,\alpha_l)
\end{split}
\end{equation}
since $\alpha_k\alpha_l$ is independent of $S^k_jS^l_j$. For an arbitrary $j$, $j$-th component of a LoRAS sample $L_j$
\begin{equation}
\begin{split}
    \newcommand{\Var}{\mathrm{Var}}\Var(L_j) &= \newcommand{\Var}{\mathrm{Var}}\Var(\alpha_1S^1_j+\dots+\alpha_{|F|}S^{|F|}_j)\\
    & = \newcommand{\Var}{\mathrm{Var}}\Var(\alpha_1S^1_j)+\dots+\Var(\alpha_{|F|}S^{|F|}_j)+\Sigma_{k=1}^{|F|}\Sigma_{l=1,l \neq k}^{|F|}\newcommand{\Cov}{\mathrm{Cov}}\Cov(\alpha_kS^k_j,\alpha_lS^l_j)\\
    & =\frac{\mu_j^2(|F|-1)+2(\sigma'^2_j+\sigma_{Bj}^2)|F|}{|F|(|F|+1)}-\frac{\mu_j^2(|F|-1)}{|F|(|F|+1)}\\
    & = \frac{2(\sigma'^2_j+\sigma_{Bj}^2)}{(|F|+1)} 
\end{split}
\end{equation}
For LoRAS, we take an affine combination of $|F|$ shadowsamples and SMOTE considers an affine combination of two minority class samples. Note, that since a SMOTE generated oversample can be interpreted as a random affine combination of two minority class samples, we can consider, $|F|=2$ for SMOTE, independent of the number of features. Also, from Equation \ref{eq:expectation}, this implies that SMOTE is an unbiased estimator of the mean of the local data distribution. Thus, the variance of a SMOTE generated sample as an estimator of $\mu$ would be $\frac{2\sigma'^2}{3}$ (since $B=0$ for SMOTE). But for LoRAS as an estimator of $\mu$, when $|F|>2$, the variance would be less than that of SMOTE. 
\end{proof}
This implies that, locally, LoRAS can estimate the mean of the underlying t-distribution better than SMOTE.
To visualize the key aspects of LoRAS oversampling, we provide the PCA plots for oversampled data from the ozone\_level dataset several oversampling methods we have studied in Figure \ref{PCA}.  
\begin{figure}[ht!] 
\includegraphics[scale=.725]{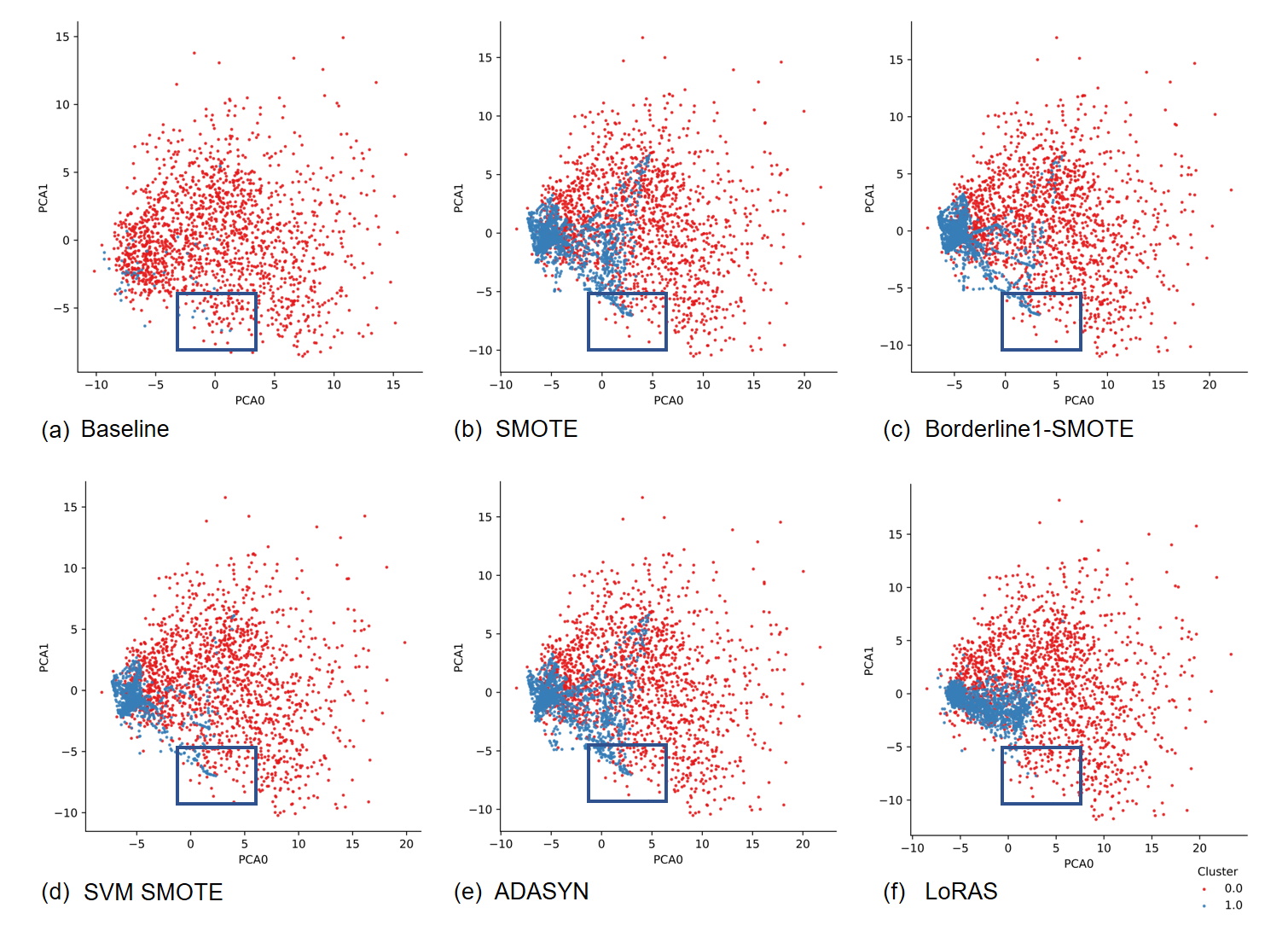}
\caption{Figure showing for Principal Component Analysis plot of ozone dataset for baseline data and oversampled data with several oversampling strategies for the ozone\_level dataset. The boxed region in each subplot shows a neighbourhood of outliers and how each oversampling stategy generates synthetic samples in that neighbourhood.}   \label{PCA}
\end{figure}
From Figure \ref{PCA} we can observe that SMOTE and ADASYN oversamples highly on the neighbourhood of the outliers, depicted by a blue box in each subplot. While this is somewhat controlled in Borderline1-SMOTE and SVM SMOTE, they still generate some synthetic samples in this neighbourhood. LoRAS on the other hand refrains, leveraging on its strategy to produce a better estimate for local mean of the underlying local data distribution. This enables LoRAS to ignore the outliers and to oversample more uniformly resulting in a better approximation of the data manifold. Note that, the average F1-Scores of the oversampling models as presented in Table \ref{table_imbsk} has a direct correlation to how the oversampling strategy oversamples in this neighbourhood. SMOTE and ADASYN generates the lowest F1-Scores and show a tendency of oversampling excessively from this neighbourhood. Borderline-SMOTE and SVM improves the F1-Score compared to SMOTE and ADASYN, again, consistent to their behaviour of oversampling lesser in this neighbourhood. LoRAS, has the highest average F1-Score and oversampling very sparsely from this neighbourhood.

\section{Conclusions}\label{Conclusion}

Oversampling with LoRAS produces comparatively balanced ML model performances on average, in terms of Balanced Accuracy and F1-Score among the compared convex-combination strategy based oversampling techniques. This is due to the fact that, in most cases LoRAS produces lesser mis-classifications on the majority class with a reasonably small compromise for mis-classifications on the minority class. From our study we infer that for tabular high dimensional and highly imbalanced datasets our LoRAS oversampling approach can better estimate the mean of the underlying local distribution for a minority class sample (considering it a random variable) and can improve Balanced accuracy and F1-Score of ML classification models. However, the scope of such convex combination based strategies including LoRAS, might be limited for heterogeneous image based imbalanced datasets.\par

The distribution of both the minority and majority class data points is considered in the oversampling techniques such as Borderline1 SMOTE, Borderline2 SMOTE, SVM-SMOTE, and ADASYN \citep{Gosain2017}. SMOTE and LoRAS are the only two techniques, among the oversampling techniques we explored, that deal with the problem of imbalance just by generating new data points, independent of the distribution of the majority class data points. Thus, comparing LoRAS and SMOTE gives a fair impression about the performance of our novel LoRAS algorithm as an oversampling technique, without considering any aspect of the distributions of the minority and majority class data points and relying just on resampling. Other extensions of SMOTE such as Borderline1 SMOTE, Borderline2 SMOTE, SVM-SMOTE, and ADASYN can also be built on the principle of LoRAS oversampling strategy. According to our analyses LoRAS already reveals great potential on a broad variety of applications and evolves as a true alternative to SMOTE, while processing highly unbalanced datasets.\par

\textbf{Availability of code:} A preliminary implementation of the algorithm in {\fontfamily{pcr}\selectfont  Python (V 3.7.4)} and an example {\fontfamily{pcr}\selectfont  Jupyter Notebook} for the credit card fraud detection dataset is provided on the GitHub repository \url{https://github.com/sbi-rostock/LoRAS}. This version does not yet include the t-embedding parameter.
In our computational code, 
$\var{\!|S\textsubscript{p}|}$ corresponds to $\var{num\_shadow\_points}$,  $\var{L\textsubscript{\textsigma}}$ corresponds to $\var{list\_sigma\_f}$, $\var{N\textsubscript{aff}}$ corresponds to $\var{num\_aff\_comb}$, $\var{N\textsubscript{gen}}$ corresponds to $\var{num\_generated\_points}$.

\textbf{Acknowledgements:} We thank Prof.\! Ria Baumgrass from Deutsches Rheuma-Forschungszentrum (DRFZ), Berlin for enlightening discussions on small datasets occuring in her research related to cancer therapy, that led us to the current work.
We thank the German Network for Bioinformatics Infrastructure (de.NBI) and  Establishment of Systems Medicine Consortium in Germany e:Med for their support, as well as the German Federal Ministry for Education and Research (BMBF) programs (FKZ 01ZX1709C) for funding us.

%
%
\bibliography{LoRAS.bib}

\section*{Supplementary data}
We provide the detailed individual results for each dataset and each ML model for our analysis as supplementary data. Here, we use the acronyms bl1, bl2, SVM and ADA for the oversampling models Borderline-1 SMOTE, Borderline-2 SMOTE, SVM-SMOTE and ADASYN respectively. We mark in bold for each dataset, the ML model with the highest average F1-Score, a criteria by which we select the model results to include in our further analysis.\\
\FloatBarrier
\textbf{Dataset: abalone\_19}
\begin{table}[h!]
    \centering
    \caption{F1-Scores for the logistic regression model for 5 runs of 10-fold cross validation for abalone\_19 dataset}
  

\end{table}

\end{document}